\newtheorem{cor}{Corollary}
\newtheorem{lem}{Lemma}
\newtheorem{prop}{Proposition}
\newtheorem{assump}{Assumption}
\begin{document}
\title{A regularized deep matrix factorized model of matrix completion for image restoration}


\author{
  Zhemin Li \thanks{lizhemin@nudt.edu.cn}\\
  College of Arts and Sciences, National University of Defense Technology\\
  \AND
  Zhi-Qin John Xu\\
  School of Mathematical Sciences,\\
  MOE-LSC and Institute of Natural Sciences,\\
  Shanghai Jiao Tong University.\\
   \AND
   Tao Luo \\
  School of Mathematical Sciences,\\
MOE-LSC and Institute of Natural Sciences,\\
Shanghai Jiao Tong University.\\
   \AND
   Hongxia Wang\thanks{Corresponding author: wanghongxia@nudt.edu.cn}\\
   College of Arts and Sciences, National University of Defense Technology\\
}



\maketitle

\begin{abstract}
It has been an important approach of using matrix completion to perform image restoration. Most previous works on matrix completion focus on the low-rank property by imposing explicit constraints on the recovered matrix, such as the constraint of the nuclear norm or limiting the dimension of the matrix factorization component. Recently, theoretical works suggest that deep linear neural network has an implicit bias towards low rank on matrix completion. However, low rank is not adequate to reflect the intrinsic characteristics of a natural image. Thus, algorithms with only the constraint of low rank are insufficient to perform image restoration well. In this work, we propose a Regularized Deep Matrix Factorized (RDMF) model for image restoration, which utilizes the implicit bias of the low rank of deep neural networks and the explicit bias of total variation. We demonstrate the effectiveness of our RDMF model with extensive experiments, in which our method surpasses the state of art models in common examples, especially for the restoration from very few observations. Our work sheds light on a more general framework for solving other inverse problems by combining the implicit bias of deep learning with explicit regularization.

\end{abstract}

\keywords{Total Variation 
\and Deep matrix factorization \and Implicit regularization \and Explict constraints}

\section{Introduction}
Image restoration from a partially observed pixel is an important problem. An image is often represented by a matrix. Therefore, image restoration is also a matrix completion, that is, recovering unknown or missing entries in a matrix.  In addition to image restoration \cite{Li2015NonLocalII,Wang2014RankAA,Wu2015ImageCW}, matrix completion problem has a wide spectrum of application venues such 
as phase recovery \cite{Cands2015PhaseRV} and recommendation systems \cite{Bojnordi2012ANC,Kang2016TopNRS}. The prior of low rank has been well studied, and a series of algorithms are developed to well perform matrix completion for low-rank matrices. These algorithms can be classified into two types. The first type imposes explicit constraints on the recovered matrix, such as the constraint of the nuclear norm or limiting the dimension of the matrix factorization component. Such explicit constraints are proved to be successful in many problems \cite{Cai2010ASV,Wen2012SolvingAL,Geng2014ANA,Fan2018MatrixCB}. The second type utilizes implicit bias of algorithms. Such implicit bias is recently studied in a deep linear neural network, in theory, i.e., theoretical works suggest that deep linear neural network has an implicit bias towards low rank on matrix completion \cite{gunasekar2017implicit,Arora2019ImplicitRI}. However, it is yet to examine its effectiveness in the real application. Low-rank matrix completion has been used in image restoration, while low rank is not adequate to reflect the intrinsic characteristics of a natural image. For example, a restored image with a lower rank can significantly differ with the corresponding intact image. Therefore, it is important to design an algorithm that takes advantage of the study of low-rank matrix completion and preserves the intrinsic properties of a natural image.

In this paper, we design a Regularized Deep Matrix Factorized (RDMF) model to perform the specific matrix completion, that is, image restoration. The idea of this RDMF model is combining a deep neural network with a total variation (TV) regularization. To demonstrate the effectiveness of our model, we compare our model with three representative methods for the restoration of two classic images, i.e., Cameraman and textual image. The first method is a popular linear method, i.e., LaMFit \cite{Wen2012SolvingAL}. The second method is also deep matrix factorization (DMF) without TV constraint \cite{Fan2018MatrixCB}. The third method is a patch-based nonlinear matrix completion (PNMC) algorithm for the image, which is recently proposed and compared with $12$ other methods in  \cite{Yang2020ANP}. We demonstrate the effectiveness of our RDMF model with extensive experiments, in which our method surpasses these three models in common examples. The restorations of LaMFit and DMF indeed satisfy the low-rank requirement. However, their results have many vertical and horizontal lines, which significantly different from natural images. The result of PNMC shows much more noise. The advantage of our model is significant for the restoration from very few observations. In our experiments, we found that to achieve a good result, the depth of the deep neural network should be larger than two, and the width should be at least around the image size. These results indicate the implicit bias in deep networks with TV better reflects the characteristics of a natural image. In addition, we found that in our model, both deep linear neural networks and nonlinear neural networks have similar good performance, namely, in such image restoration, the benefit from the non-linearity may be incremental. 

The remainder of this paper is organized as follows:
in Section 2, we give the RDMF model firstly, and then we introduce
gradient descent algorithm.
In Section 3, we design several experiments to illustrate the critical components of RDMF.
In Section 4, we discuss the underlying mechanism of the RDMF.
In Section 5, we state conclusions and future research plans.

\section{Model formulation}\label{sec..model}
In this section, we mainly discuss how to build gradient descent based matrix completion models.
\subsection{Regularized deep matrix factorized model}

Before building the model, here we give some more notations, 
which are slightly different from the suggested notations for machine learning \cite{beijing2020Suggested}.
An $L$-layer neural network is denoted by
\begin{equation}
    \label{general dmf}
    f_{\bm{\theta}}(\bm{X})=\sigma\circ\left(\bm{W}^{[L-1]} \sigma \circ\left(\bm{W}^{[L-2]} \sigma \circ\left(\cdots\left(\bm{W}^{[1]} \sigma \circ\left(\bm{W}^{[0]} \bm{X}+\bm{b}^{[0]}\right)+\bm{b}^{[1]}\right) \cdots\right)+\bm{b}^{[L-2]}\right)+\bm{b}^{[L-1]}\right),
\end{equation}
where $\bm{X} \in\mathbb{R}^{m_{0}\times m_{0}} $,  $\bm{W}^{[l]}\in\mathbb{R}^{m_{l+1}\times m_{l}},\bm{b}^{[l]}\in\mathbb{R}^{m_{l+1}\times m_{0}},m_0=d,m_L=d_{\rm o},\sigma$ is a scalar function
and "$\circ$" means entry-wise operation.
We denote the vector of all parameters by
$$
    \bm{\theta}=\mathrm{vec}\left(\bm{W}^{[0]}, \bm{W}^{[1]}, \ldots, \bm{W}^{[L-1]}, \bm{b}^{[0]}, \bm{b}^{[1]}, \ldots, \bm{b}^{[L-1]}\right)
$$
and an entry of $\bm{W}^{[l]}$ by $W_{ij}^{[l]}$.


Assume that the original data matrix without missing entry is $\bm{X}\in\mathbb{R}^{d_{\rm o}\times d}$.
The $(i,j)$-th entry of $\bm{X}$ is $X_{ij},(1\leq i\leq d_{\rm o},1\leq j\leq d)$.
The mask matrix $\bm{\Omega}$ is defined as
$$
    \Omega_{i j}=\left\{\begin{array}{ll}
    1, & \text { if } X_{i j} \text { is known }, \\
    0, & \text { if } X_{i j} \text { is missed }.
    \end{array}\right. 
$$
And the fidelity term for completing $\bm{X}$ by $f_{\bm{\theta}}$ is
\begin{equation}
    \label{eq:implicit}
    R_{\bm{\Omega}}\left(\bm{X},\bm{\theta}\right) =
    \left\|\bm{\Omega}\odot\left(\bm{X}-f_{\bm{\theta}}(\bm{I}_d)\right)\right\|_\mathrm{F}^2,
\end{equation}
where  $\bm{I}_d$ is a $d$ dimensions identity matrix, $\odot$ stands for
the Hadamard (entry-wise) product.

A regularized model is to solve the following problem:
\begin{equation}
	\label{eq:rdmf}
	R(\bm{X},\bm{\theta})=R_{\bm{\Omega}}(\bm{X},\bm{\theta})+\lambda R_\mathrm{reg}(\bm{X},\bm{\theta}),
\end{equation}
where $R_\mathrm{reg}$ is the regularization term.
Given some observation $\bm{X}|_{\bm{\Omega}}$, the matrix completion problem can be solved by
\begin{align*}
    \hat{\bm{\theta}}=\arg\min_{\bm{\theta}}R(\bm{X},\bm{\theta})
    &=R_{\bm{\Omega}}(\bm{X},\bm{\theta})+\lambda R_\mathrm{reg}(\bm{X},\bm{\theta}),\\
    \hat{\bm{X}}
    &=f_{\hat{\bm{\theta}}}(\bm{I}_d).
\end{align*}

When we set $\sigma(\cdot)=\cdot,\bm{b}^{[l]}=0,R_\mathrm{reg}=0$,
the formulation of the deep network is same as  
\begin{equation}
\label{eq:linear dmf}
f_{\bm{\theta}}(\bm{I}_d)=\bm{W}^{[L-1]}\bm{W}^{[L-2]}\ldots\bm{W}^{[1]}\bm{W}^{[0]}
=\bm{W},
\end{equation}
where $f_{\bm{\theta}}(\bm{I}_d)$ is the recovered matrix.

Note that DMF in \cite{Fan2018MatrixCB} share same columns in $\bm{b}^{[l]}$
and set $R_\mathrm{reg}=\sum_{l=0}^{L-1}\left\|\bm{W}^{[l]}\right\|_\mathrm{F}^2$.
DMF is most similar to our model,
but their model requires $m_1<m_2<\ldots<m_L$ to constraint the rank of $f_{\bm{\theta}}(\bm{I}_d)$ explicitly.
Our model do not constraint on the dimension of $m_l$,
as Arora et al. \cite{Arora2019ImplicitRI} illustrate that
Eq. (\ref{eq:linear dmf}) implicitly convergence to low rank matrix.
A more important difference between DMF and RDMF is that we set 
regularization term as TV which will be introduced next section.

The TV regularization is often used in image processing, which tends to a smooth output \cite{Rudin1992NonlinearTV}. The TV norm can either be the anisotropic TV norm
$$
    \left\|f\right\|_\mathrm{TV_1}=\sum_{i,j}\left(\left|D_x f(i,j)\right|
    +\left|D_y f(i,j)\right|\right),
$$
or the isotropic TV norm
$$
    \left\|f\right\|_\mathrm{TV_2}=\sum_{i,j}\sqrt{(D_x f(i,j))^2+(D_y f(i,j))^2},
$$
where the finite differences $D_x f(i,j)=f(i+1,j)-f(i,j)$ and $D_y f(i,j)=f(i,j+1)-f(i,j)$.
Note that the anisotropic (isotropic) TV norm $\|f\|_\mathrm{TV}$ is equivalent to discretization of
the 1-norm (2-norm) of $\nabla f$.
They are TV-L1 and TV-L2 respectively in this paper.

We propose our RDMF model as follow:
\begin{equation}
\label{eq:main model}
	\hat{\bm{\theta}}=\arg\min_{\bm{\theta}} R(\bm{X},\bm{\theta})=R_{\bm{\Omega}}+\lambda R_\mathrm{TV}=\left\|\bm{\Omega}\odot\left(\bm{X}-f_{\bm{\theta}}(\bm{I}_d)\right)\right\|_\mathrm{F}^2+\lambda \left\|f_{\bm{\theta}}(\bm{I}_d)\right\|_\mathrm{TV},
	\hat{\bm{X}}=f_{\hat{\bm{\theta}}}(\bm{I}_d).
\end{equation}

Compared with TV-L2, the TV-L1 norm leads to a sharper edge. 
In the noisy image recover problem, 1-norm
TV norm is often chosen.
However, in the matrix completion problem, we find out that TV-L2 outperformed TV-L1 in most tasks.
Without loss of generality, we consider TV regularizer both including TV-L1 and TV-L2.

\subsection{Gradient descent}  
We introduce gradient descent method firstly to build an 
intuitive understanding of the optimization progress.
At time step $t$, the parameters update as follow:
$$
    \bm{\theta}_{t+1} =\bm{\theta}_t -
    \eta\nabla_{\bm{\theta}}R,
$$
where $\theta_t$ is the value of parameters at the $t$th iteration,
$\eta$ is the learning rate.
The vanilla gradient descent algorithm
has many drawbacks such as convergence speed is slow and 
easy to fall into the local optimal.
As a result, many variants of gradient descent are proposed to improve 
the gradient descent algorithm.
Adam \cite{Kingma2015AdamAM} is one of them and
performs well in practical training tasks.
Therefore, we choose the Adam algorithm to solve our proposed model \ref{eq:main model}.
\section{Experiment}
\label{sec:exp}
In this section, we first show the effectiveness of RDMF over peer methods. 
\subsection{Settings}
\textbf{Task} We compare RDMF with LaMFit \cite{Wen2012SolvingAL},
DMF \cite{Fan2018MatrixCB} and PNMC \cite{Yang2020ANP} both on natural and textual images.
Similar to the tests in \cite{Yang2020ANP}, all of these images are $240\times 240$ gray-scale images scaled from standard $512\times 512$ images.
We generate a mask matrix $\bm{\Omega}$ with different missing percentage randomly.

\textbf{Peer methods}
\begin{enumerate}
\item LaMFit \cite{Wen2012SolvingAL}: Based on the matrix decomposition model, LaMFit
constructs a super-relaxation algorithm, which only
needs to solve a least-squares problem in each iteration and avoids the 
the computation consuming SVD in the kernel-norm based method.
\item DMF \cite{Fan2018MatrixCB}: This algorithm utilizes deep neural 
network to recover missing entries without considering regularization.
\item PNMC \cite{Yang2020ANP}: Considering the spatial locality of the datasets, convolutional neural network (CNN) is designed
to obtain the matrix with missing entries.
\item RDMF: The method proposed in this paper. We default choose TV as regularization term and $\sigma(\bm{x})=\bm{x}$.
\end{enumerate}

\textbf{Criteria}
In order to evaluate the accuracy of the matrix completion, we adopt 
Normalized Mean Absolute Error (NMAE) \cite{Wen2012SolvingAL} as a criteria.
NMAE is defined as
$$
    \mathrm{NMAE}=\frac{1}{\left(X_{\max }-X_{\min }\right)|\bm{\Omega}^c|}
    \sum_{(i, j) \in \bm{\Omega}^c} \left | \hat{X}_{i j}-X_{i j}\right |,
$$
where $\hat{\bm{X}},\bm{X},\bm{\Omega}^c,|\bm{\Omega}^c|,X_{\max}$ and $X_{\min}$ denote the
prediction matrix, the original matrix, the mask matrix of test datasets, 
the cardinality of $\bm{\Omega}^c$, the maximum entry of matrix $\bm{X}$ and the 
minimum entry of matrix $\bm{X}$, respectively. 
Smaller values of NMAE indicate better predictive accuracy than the larger one.

\textbf{Model parameters settings}
The hyper-parameters for the proposed method
are set as follows:
$\eta = 0.001$, parameters initialization Gaussian distribution of $\mathcal{N}(0,10^{-3})$.
Without special instruction, we default set $L=3, m_i=240,\lambda=\frac{1}{240}$.
The algorithm stop when iteration step is larger than 10000 or 
the training loss $\left|R(\bm{X},\bm{\theta}_t)-R(\bm{X},\bm{\theta}_{t+1})\right|<10^{-3}$,
where $R(\bm{X},\bm{\theta}_t)$ is the value of $R$ at $t$-th iteration.
Besides, the proposed algorithm is implemented in the Pytorch framework.
All the simulations are conducted on the same workstation with an Intel(R)
Xeon(R) Silver 4110 CPU @ 2.10GHz, Nvidia GeForce GRX 2080Ti, running with Linux and using Python tool.

\subsection{Effectiveness of RDMF}
Firstly we randomly drop pixels in Cameraman and textual image,
and then recover them by LaMFit \cite{Wen2012SolvingAL}, DMF \cite{Fan2018MatrixCB}, PNMC \cite{Yang2020ANP}. 
Fig. \ref{fig:recovered} shows the recovered result of different methods.
We can observe that both DMF and PNMC perform better than LaMFit.
In \cite{Fan2018MatrixCB,Yang2020ANP}, Fan et al. and Yang et al. 
claim that the improvement of recovering is derived from the nonlinearity of the model. However, 
in this experiment, we choose $\sigma(\bm{x})=\bm{x}$ as the activation function of RDMF model, which is degenerate to a linear model.
The restored image is much better than all of these models. This indicates that nonlinearity is not always important in image restoration. 
As shown in Fig. \ref{fig:recovered} (e), RDMF recover more details than PNMC.
Specially, we can identify the text on the recovered result of the textual image.
These results illustrate that a linear model with the TV regularization term can well perform the image restoration, even better for a non-linear model.

To explore the effectiveness of the RDMF model, we calculate the NMAE of three peer methods
and RDMF with different activation functions.
Both of the RDMF with and without TV regularization terms are calculated.
Fig. \ref{fig:namecameraman} shows the NMAE of restored Cameraman at different missing percentage and models.
Tab. \ref{tab:nmae of text} shows the NMAE of the restored textual image at different missing percentage and models.
We observe that almost all the RDMF model with TV regularization have a lower NMAE than PNMC.
Even the linear model of RDMF with TV regularization outperforms other methods.
The linear model of RDMF with TV regularization is very similar to the nonlinear one.
Note that LaMFit performs better than RDMF of Tanh and ReLU activation function without TV regularization term. Therefore, TV regularization is very important in this image restoration.

\begin{figure}
	\centering
	\includegraphics[width=0.8\linewidth]{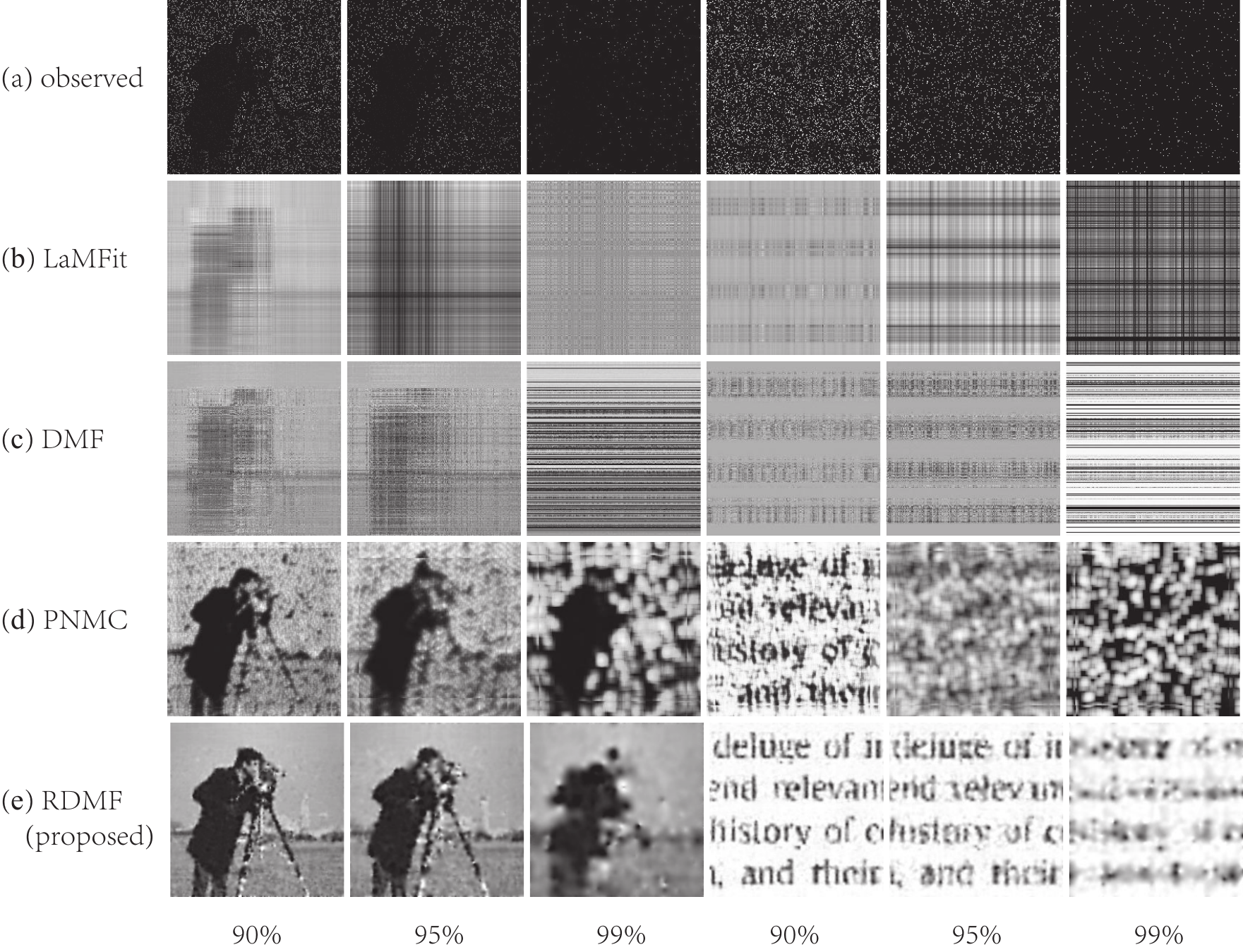}
	\caption{image restoration results for the Cameraman and the textual image with $90\%,95\%,99\%$ of pixels missing from left to right, respectively. The input images with randomly missed pixels are shown in (a). Followed by results respectively are LaMFit (b) \cite{Wen2012SolvingAL}, DMF (c) \cite{Fan2018MatrixCB}, PNMC (d) \cite{Yang2020ANP}, proposed RDMF (e).
		RDMF model choose the linear activation function and set
		$L=3, m_0=m_1=m_2=m_3=240$.}
	\label{fig:recovered}
\end{figure}

\begin{figure}
	\centering
	\includegraphics[width=0.8\linewidth]{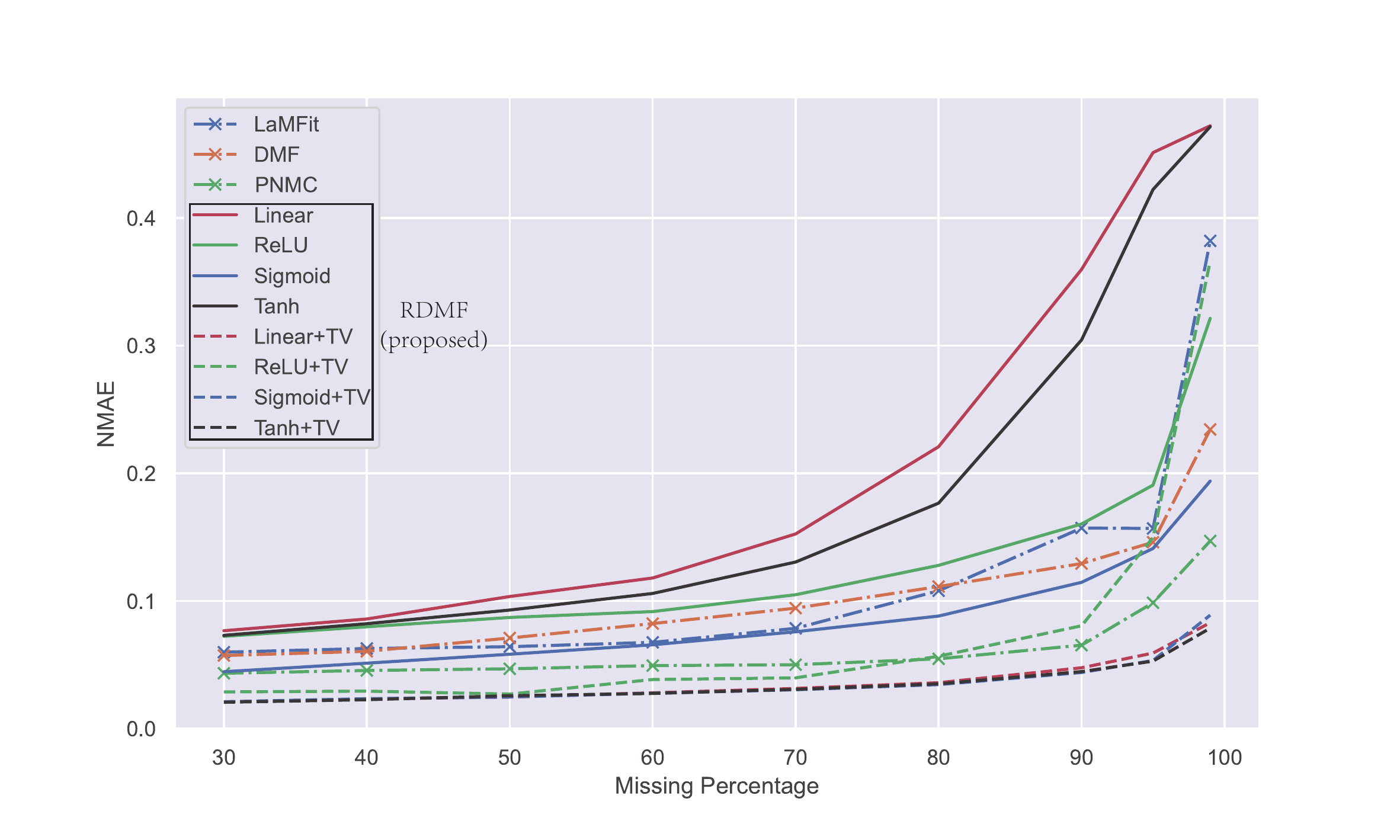}
	\caption{NMAE value of restored Cameraman with different models.
		Linear, ReLU, Sigmoid, and Tanh represent the RDMF model with different activation functions, and no TV term is added.
		Linear+TV, ReLU+TV, Sigmoid+TV and Tanh+TV represent RDMF
		model with different activation functions and TV term.}
	\label{fig:namecameraman}
\end{figure}

\begin{table}[htbp]
	\centering
	\caption{NMAE of different models in the restored textual image. Linear, ReLU, Sigmoid, and Tanh represent the RDMF model with different activation functions, and no TV term is added. Different rows of the table represent a different missing percentage of images.}
	\begin{tabular}{r|rrr|rrrr|rrrr}
		\toprule
		& \multicolumn{7}{c|}{Without TV}                       & \multicolumn{4}{c}{ With TV} \\
		\midrule
		& \multicolumn{1}{l}{LaMFit} & \multicolumn{1}{l}{DMF} & \multicolumn{1}{l|}{PNMC} & \multicolumn{8}{c}{Proposed} \\
		\cmidrule{1-1}\cmidrule{5-12}          & \multicolumn{1}{l}{\cite{Wen2012SolvingAL}} & \multicolumn{1}{l}{\cite{Fan2018MatrixCB}} & \multicolumn{1}{l|}{\cite{Yang2020ANP}} & \multicolumn{1}{l}{Linear} & \multicolumn{1}{l}{ReLU} & \multicolumn{1}{l}{Sigmoid} & \multicolumn{1}{l|}{Tanh} & \multicolumn{1}{l}{Linear} & \multicolumn{1}{l}{ReLU} & \multicolumn{1}{l}{Sigmoid} & \multicolumn{1}{l}{Tanh} \\
		\midrule
		30\%  & 0.0942 & 0.0593 & \multicolumn{1}{r}{0.0454} & 0.0459 & 0.044 & 0.0097 & 0.0249 & 0.0088 & 0.0153 & \textbf{0.0083} & 0.009 \\
		40\%  & 0.0969 & 0.0665 & \multicolumn{1}{r}{0.0531} & 0.0651 & 0.0526 & 0.0122 & 0.0327 & 0.0102 & 0.0243 & 0.0093 & \textbf{0.0091} \\
		50\%  & 0.0975 & 0.0778 & \multicolumn{1}{r}{0.0562} & 0.0851 & 0.0616 & 0.018 & 0.0437 & 0.012 & 0.0116 & \textbf{0.0104} & 0.0106 \\
		60\%  & 0.0998 & 0.0928 & \multicolumn{1}{r}{0.0608} & 0.1115 & 0.0859 & 0.0265 & 0.0593 & 0.0142 & 0.072 & \textbf{0.0126} & 0.013 \\
		70\%  & 0.1036 & 0.1112 & \multicolumn{1}{r}{0.0778} & 0.1485 & 0.118 & 0.0405 & 0.075 & 0.0179 & 0.0617 & \textbf{0.0159} & 0.0168 \\
		80\%  & 0.1148 & 0.1436 & \multicolumn{1}{r}{0.1068} & 0.2154 & 0.1485 & 0.065 & 0.1   & 0.0263 & 0.0871 & \textbf{0.0227} & 0.0237 \\
		90\%  & 0.1462 & 0.165 & \multicolumn{1}{r}{0.1198} & 0.438 & 0.1992 & 0.0991 & 0.1329 & 0.0474 & 0.1308 & \textbf{0.0392} & 0.0414 \\
		95\%  & 0.1605 & 0.1804 & \multicolumn{1}{r}{0.1334} & 0.7105 & 0.2329 & 0.1233 & 0.186 & 0.0751 & 0.156 & \textbf{0.0578} & 0.0645 \\
		99\%  & 0.3586 & 0.2663 & \multicolumn{1}{r}{0.1357} & 0.8729 & 0.4635 & 0.1925 & 0.856 & 0.1343 & 0.6995 & \textbf{0.1023} & 0.1178 \\
		\bottomrule
	\end{tabular}%
	\label{tab:nmae of text}%
\end{table}%

\subsection{Wider is better}
Different from LaMFit and DMF, which constrain small $m_1$ and $m_2$ to ensure the low-rank property of the restored images.
We default choose $m_0=m_1=m_2=m_3=240$ in the before mentioned experiments.
Arora et al. \cite{Arora2019ImplicitRI} claim that when using gradient descent to optimize the loss function (\ref{eq:implicit}).
The recovered matrix implicitly tends to be low-rank.
This implicit low-rank regularization makes our model work with the large $m_1$ and $m_2$ possible.
In general, a wider neural network has a stronger express ability.
It is reasonable to expect a larger $m_1$ and $m_2$ will lead to better performance.
The cameraman of missing $50\%$ pixels with RDMF is restored in this experiment.
Fig. \ref{fig:width} shows the NMAE with different $m_1$ and $m_2$.
A larger dimension indeed lead to a better performance on both numerically indicators (Fig. \ref{fig:width}(a)) and visually quality (Fig. \ref{fig:width}(b)). 
These results illustrate that the wider neural network can capture more information than a narrower one.

When $m_1=m_2=240$, the NMAE achieves least.
Continue increasing $m_1$ and $m_2$, RDMF performs same as the case of $m_1=m_2=240$.
In LaMFit and DMF, a proper low dimension is essential for the restored performance. This assumption means that different problems have a different optimal dimension.
In RDMF, we choose $m_0=m_1,m_2=m_3$, and need not worry about a large dimension that will lead to a bad performance.

\begin{figure}
	\centering
	\includegraphics[width=0.8\linewidth]{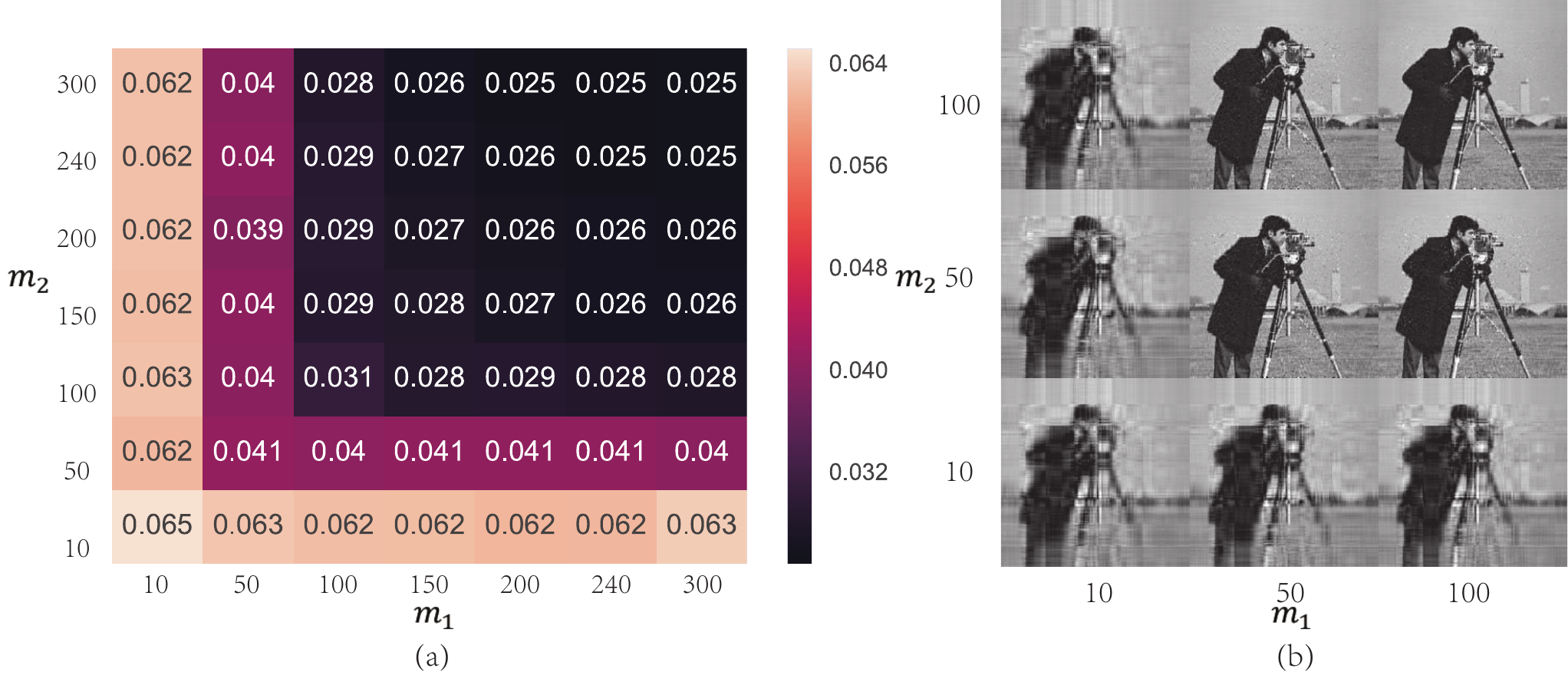}
	\caption{We set $L=3$ and choose the linear activation function.
		The input image of our model is that Cameraman missed $50\%$ pixels randomly. $m_1$ and $m_2$ are changed, the value of the heatmap
		in (a) represent the NMAE of the restored image.
		Part of the restored image with different $m_1,m_2$ are shown in (b).}
	\label{fig:width}
\end{figure}

\section{Discussion}
According to the numerical results,
TV regularization term plays a quite important role in image restoration problems.
Intuitively, TV regularization term forces the reconstructed image to have the smoothness property.
However, the influence of TV on the implicit low-rank regularization is still unknown.
In this section, we follow Arora et al. \cite{Arora2019ImplicitRI} to obtain an explicitly expression for the dynamics of singular values of the product matrix for our model with a specific regularization term.

To make the statements clear, we still use notations in Section \ref{sec..model} and introduce the following assumption.
\begin{assump}\label{assump..balance}
    Factor matrices are balanced at initialization, i.e.,
    $${\bm{W}^{[l+1]}}^{\top}(0) \bm{W}^{[l+1]}(0)=\bm{W}^{[l]}(0) {\bm{W}^{[l]}}^{\top}(0) ,\quad l=0, \ldots, L-2.$$
\end{assump}
Under the above assumption, Arora et al. studied the gradient flow of product matrix $\bm{W}(t)$ with the non-regularized risk function $R_{\bm{\Omega}}$,
i.e., 
\begin{equation}
    \dot{\bm{W}}^{[l]}(t)=-\frac{\partial}{\partial \bm{W}^{[l]}} R_{\bm{\Omega}}\left(\bm{W}(t)\right), \quad t \geq 0, \quad l=0, \ldots, L-1,\label{eq..DynamicsNonReg}
\end{equation}
where $\bm{W}=\bm{W}^{[L-1]}\bm{W}^{[L-2]}\ldots\bm{W}^{[1]}\bm{W}^{[0]}$. The empirical risk
$R_{\bm{\Omega}}(\cdot)$ can be any analytic function of entries of the matrices, not necessarily the Frobenious norm.
Thanks to the analyticity of $R_{\bm{\Omega}}(\cdot)$, we have the following singular value decomposition where each matrix is an analytic function of time.
\begin{lem}[\cite{Arora2019ImplicitRI}]
    The product matrix $\bm{W}(t)$ can be expressed as:
    \[
    \bm{W}(t)=\bm{U}(t) \bm{S}(t) \bm{V}^{\top}(t),
    \]
    where $\bm{U}(t) \in \mathbb{R}^{d_{\rm o}, \min \left\{d, d_{\rm o}\right\}}, 
    \bm{S}(t) \in \mathbb{R}^{\min \left\{d, d_{\rm o}\right\}, 
    	\min \left\{d, d_{\rm o}\right\}}$, and 
    $\bm{V}(t) \in \mathbb{R}^{d, \min \left\{d, d_{\rm o}\right\}}$ are analytic
    functions of t; and for every t, the matrices $\bm{U}(t)$ and 
    $\bm{V}(t)$ have orthonormal columns, while $\bm{S}(t)$ is 
    diagonal (its diagonal entries may be negative and may appear in any order).
\end{lem}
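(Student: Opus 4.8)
The plan is to reduce the statement to a known theorem on the analytic singular value decomposition of a real analytic matrix-valued function, after first verifying that the product matrix $\bm{W}(t)$ is itself analytic in $t$.

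First I would establish the analyticity of $\bm{W}(t)$. The gradient flow \eqref{eq..DynamicsNonReg} is an autonomous system of ODEs whose right-hand side is $-\partial R_{\bm{\Omega}}/\partial \bm{W}^{[l]}$; since $R_{\bm{\Omega}}$ is assumed analytic in the entries of the factor matrices, each component of the vector field is analytic. By the standard existence–uniqueness theory for ODEs with analytic right-hand side (the analytic version of Cauchy's theorem / Picard–Lindel\"of), the trajectory $\bm{\theta}(t)=(\bm{W}^{[0]}(t),\dots,\bm{W}^{[L-1]}(t))$ is a real analytic function of $t$ on its maximal interval of existence. The product $\bm{W}(t)=\bm{W}^{[L-1]}(t)\cdots\bm{W}^{[0]}(t)$ is then analytic, being a finite matrix product of analytic matrix functions.

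Next I would invoke the analytic SVD theorem: any real analytic map $t\mapsto\bm{W}(t)\in\mathbb{R}^{d_{\rm o}\times d}$ admits a factorization $\bm{W}(t)=\bm{U}(t)\bm{S}(t)\bm{V}^{\top}(t)$ in which $\bm{U}(t)$, $\bm{S}(t)$, $\bm{V}(t)$ are real analytic, $\bm{U}(t)$ and $\bm{V}(t)$ have orthonormal columns for every $t$, and $\bm{S}(t)$ is diagonal. The crucial point — and the reason the conclusion is stated with the caveats that the diagonal entries of $\bm{S}(t)$ may be negative and may appear in any order — is that the textbook SVD, with its nonnegative and nonincreasingly ordered singular values, is generically \emph{not} analytic: at a time where two singular values cross, or where a singular value touches zero and would otherwise ``bounce'' like $|t|$, the ordered and nonnegative convention destroys smoothness. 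Relaxing the sign and ordering constraints is exactly what restores analyticity, and this relaxed existence result is the ingredient cited from \cite{Arora2019ImplicitRI}.

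The hard part will be precisely this relaxation and the local-to-global argument behind the analytic SVD theorem: one first constructs analytic factors locally via analytic perturbation theory, applying Rellich-type results to the symmetric analytic family $\bm{W}(t)^{\top}\bm{W}(t)$ to obtain analytic eigenvalues and an analytic orthonormal eigenbasis even through crossings, and then patches these local factorizations into global analytic $\bm{U}$, $\bm{S}$, $\bm{V}$ on the whole interval, carefully handling the points where $\bm{S}(t)$ changes sign or drops rank. I would therefore treat the analyticity of $\bm{W}(t)$ and the statement of the analytic SVD theorem as the two load-bearing steps, citing the latter rather than reproving the perturbation theory from scratch.
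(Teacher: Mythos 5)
Your proposal is correct and matches the argument behind this result: the paper itself offers no proof, quoting the lemma directly from Arora et al.\ \cite{Arora2019ImplicitRI}, and in that reference the lemma is established exactly as you describe --- analyticity of the trajectory from the analytic ODE existence theory applied to the gradient flow, followed by the analytic SVD theorem (Rellich/Bunse-Gerstner--type perturbation theory) with the sign and ordering constraints on $\bm{S}(t)$ relaxed to preserve analyticity through singular-value crossings. Nothing further is needed.
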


The diagonal entries of $\bm{S}(t),$ which we denote by
$\sigma_{1}(t), \ldots, \sigma_{\min \left\{d, d_{\rm o}\right\}}(t),$ 
are signed singular values of $\bm{W}(t)$. The columns of $\bm{U}(t)$ and $\bm{V}(t)$,
denoted by $\bm{u}_{1}(t), \ldots, \bm{u}_{\min \left\{d, d_{\rm o}\right\}}(t)$ 
and $\bm{v}_{1}(t), \ldots, \bm{v}_{\min \left\{d, d_{\rm o}\right\}}(t),$ 
are the corresponding left and right singular vectors respectively.

\begin{prop}[{\cite[Theorem 3]{Arora2019ImplicitRI}}]\label{prop..SinuglarValue}
    Consider the dynamics \eqref{eq..DynamicsNonReg} with initial data satisfying Assumption \ref{assump..balance}. Then the signed singular values $\sigma_r(t)$ of the product matrix $\bm{W}(t)$ evolve by:
    \begin{equation}
    \label{eq:arora}
    \dot{\sigma}_r(t)=-L \left(\sigma_r^{2}(t)\right)^{1-\frac{1}{L}} 
    \left\langle\nabla_{\bm{W}} R_{\bm{\Omega}}(\bm{W}(t)), \bm{u}_r(t) 
    \bm{v}_r^{\top}(t)\right\rangle, \quad r=1, \ldots, 
    \min \left\{d, d_{\rm o}\right\}.
    \end{equation}
\end{prop}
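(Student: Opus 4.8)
The plan is to reduce the coupled dynamics of the factors to a single closed evolution equation for the product matrix $\bm{W}(t)$, and then project that equation onto the $r$-th singular mode. First I would verify that the balancedness of Assumption \ref{assump..balance} is preserved along the flow \eqref{eq..DynamicsNonReg}. Differentiating the gap ${\bm{W}^{[l+1]}}^{\top}\bm{W}^{[l+1]} - \bm{W}^{[l]}{\bm{W}^{[l]}}^{\top}$ in time and substituting the chain-rule expressions for the factor gradients $\tfrac{\partial}{\partial\bm{W}^{[l]}}R_{\bm{\Omega}}$ (each of which factors through the single end-to-end gradient $\nabla_{\bm{W}}R_{\bm{\Omega}}$ flanked by the remaining factor matrices), the time derivative of the gap cancels identically. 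Hence the identity ${\bm{W}^{[l+1]}}^{\top}(t)\bm{W}^{[l+1]}(t) = \bm{W}^{[l]}(t){\bm{W}^{[l]}}^{\top}(t)$ holds for all $t\geq 0$, not merely at initialization.

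Second, I would convert the factor dynamics into an end-to-end equation for $\bm{W}(t)$. Writing $\dot{\bm{W}} = \sum_{l} \bm{W}^{[L-1]}\cdots\bm{W}^{[l+1]}\,\dot{\bm{W}}^{[l]}\,\bm{W}^{[l-1]}\cdots\bm{W}^{[0]}$ and inserting the flow \eqref{eq..DynamicsNonReg}, each summand becomes a product of factor matrices surrounding $\nabla_{\bm{W}}R_{\bm{\Omega}}$. The preserved balancedness lets me collapse these products into symmetric fractional powers, yielding
\[
\dot{\bm{W}}(t) = -\sum_{j=1}^{L}\left[\bm{W}(t)\bm{W}^{\top}(t)\right]^{\frac{L-j}{L}}\nabla_{\bm{W}}R_{\bm{\Omega}}(\bm{W}(t))\left[\bm{W}^{\top}(t)\bm{W}(t)\right]^{\frac{j-1}{L}}.
\]
I expect this collapse to be the main obstacle: it requires turning each mixed product of factor matrices into a power of $\bm{W}\bm{W}^{\top}$ (resp. $\bm{W}^{\top}\bm{W}$) using balancedness, which is the technical heart of the argument and must be handled with care so the fractional exponents are well defined despite the singular values being signed.

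Third, I would differentiate the analytic decomposition $\bm{W} = \bm{U}\bm{S}\bm{V}^{\top}$ supplied by the preceding lemma, giving
\[
\dot{\bm{W}} = \dot{\bm{U}}\bm{S}\bm{V}^{\top} + \bm{U}\dot{\bm{S}}\bm{V}^{\top} + \bm{U}\bm{S}\dot{\bm{V}}^{\top}.
\]
Left-multiplying by $\bm{U}^{\top}$ and right-multiplying by $\bm{V}$ and then reading off the $(r,r)$ entry, the contributions from $\bm{U}^{\top}\dot{\bm{U}}$ and $\dot{\bm{V}}^{\top}\bm{V}$ vanish: orthonormality of the columns forces both matrices to be antisymmetric, so their diagonals are zero. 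This isolates $\dot{\sigma}_r = \bm{u}_r^{\top}\dot{\bm{W}}\bm{v}_r = \langle \dot{\bm{W}}, \bm{u}_r\bm{v}_r^{\top}\rangle$.

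Finally, I would substitute the end-to-end dynamics into this expression and simplify. Since $\bm{W}\bm{W}^{\top} = \bm{U}\bm{S}^2\bm{U}^{\top}$ and $\bm{W}^{\top}\bm{W} = \bm{V}\bm{S}^2\bm{V}^{\top}$, the fractional powers act diagonally on the singular vectors, so $\bm{u}_r^{\top}\left[\bm{W}\bm{W}^{\top}\right]^{\frac{L-j}{L}} = (\sigma_r^2)^{\frac{L-j}{L}}\bm{u}_r^{\top}$ and $\left[\bm{W}^{\top}\bm{W}\right]^{\frac{j-1}{L}}\bm{v}_r = (\sigma_r^2)^{\frac{j-1}{L}}\bm{v}_r$. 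Each of the $L$ summands thus contributes $(\sigma_r^2)^{\frac{L-j}{L}+\frac{j-1}{L}}\langle\nabla_{\bm{W}}R_{\bm{\Omega}}, \bm{u}_r\bm{v}_r^{\top}\rangle = (\sigma_r^2)^{1-\frac{1}{L}}\langle\nabla_{\bm{W}}R_{\bm{\Omega}}, \bm{u}_r\bm{v}_r^{\top}\rangle$, independent of $j$, and summing the $L$ identical terms produces the stated prefactor $L$, completing the derivation of \eqref{eq:arora}.
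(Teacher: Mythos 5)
Your proposal is correct, but note that the paper does not prove this proposition at all: it is imported verbatim as Theorem 3 of Arora et al.\ \cite{Arora2019ImplicitRI}, so there is no in-paper proof to compare against. What you have written is a faithful reconstruction of the argument in that cited work (conservation of balancedness along the flow, collapse of the factor dynamics into the end-to-end equation $\dot{\bm{W}}=-\sum_{j=1}^{L}[\bm{W}\bm{W}^{\top}]^{\frac{L-j}{L}}\nabla_{\bm{W}}R_{\bm{\Omega}}[\bm{W}^{\top}\bm{W}]^{\frac{j-1}{L}}$, and projection onto the $r$-th singular mode using antisymmetry of $\bm{U}^{\top}\dot{\bm{U}}$ and $\dot{\bm{V}}^{\top}\bm{V}$), with the one genuinely technical step --- rewriting the partial products of factor matrices as fractional powers of $\bm{W}\bm{W}^{\top}$ and $\bm{W}^{\top}\bm{W}$, which is unproblematic since these matrices are positive semidefinite even though the $\sigma_r$ are signed --- correctly identified as the heart of the matter.
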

If the matrix factorization is non-degenerate,
i.e., has depth $L \geq 2,$ the singular values 
need not be signed (we may assume $\sigma_r(t) \geq 0$ for all $t$ ).

Arora et al. claimed the terms $\left(\sigma_r^2(t)\right)^{1-\frac{1}{L}}$ enhance the movement of large singular values, and on the other hand, attenuate that of small ones.
The enhancement/attenuation becomes more significant as $L$ grows.

This explanation of the implicit low-rank regularization of gradient descent is far from complete because the terms $\left\langle\nabla_{\bm{W}} R_{\bm{\Omega}}(\bm{W}(t)), \bm{u}_r(t)\bm{v}_r^{\top}(t)\right\rangle$ are unkonwn. Hence
Arora et al. further discussed a particular case to illustrate the low-rank regularization further.
In this paper, we give an illustration of the implicit low-rank regularization of TV-like regularization term.
To simplify the proof, we modify from
$
    R_{\mathrm{TV}}(\bm{W})=\left\|\bm{W}\right\|_\mathrm{TV}=\sum_{i,j}\sqrt{(D_x \bm{W}(i,j))^2
    	+(D_y \bm{W}(i,j))^2}
$
to 
$
    \tilde{R}_\mathrm{TV}(\bm{W})=\sum_{i,j}(D_x \bm{W}(i,j))^2 +(D_y \bm{W}(i,j))^2
$. The new TV-like regularization term has a negligible influence in experiments.
Without loss of generality, we assume $d=d_{\rm o}$, and then we have the following proposition.
\begin{prop}\label{prop..DynamicsReg}
    Consider the following dynamics with initial data satisfying Assumption \ref{assump..balance}:
    \begin{align*}
        \dot{\bm{W}}^{[l]}(t)
        &= -\frac{\partial}{\partial \bm{W}^{[l]}} R\left(\bm{W}(t)\right), \quad t \geq 0, \quad l=0, \ldots, L-1,\label{eq..DynamicsReg}
    \end{align*}
    where $ R(\bm{W})= R_{\bm{\Omega}}(\bm{W})+\lambda \tilde{R}_\mathrm{TV}(\bm{W})$ with $\tilde{R}_\mathrm{TV}(\bm{W})=\left\|\bm{AW}\right\|_\mathrm{F}^2+\left\|\bm{WA}^{\top}\right\|_\mathrm{F}^2$, and
    	$
    	\bm{A}=
    	\left[
    	\begin{array}{cccc}
    	1&-1&&\\
    	&\ddots&\ddots&\\
    	&&1&-1\\
    	-1&&&1\\
    	\end{array}
    	\right]_{d\times d}
    	$. Then we have 
    \begin{equation}
        \bm{u}_r^{\top}\left(\nabla_{\bm{W}}\tilde{R}_\mathrm{TV}\right)\bm{v}_r = 2\sigma_r(\bm{u}_r^{\top}\bm{A}^{\top}\bm{Au}_r+\bm{v}_r^{\top}\bm{A}^{\top}\bm{Av}_r)
        =2\sigma_r\gamma_r(t),
    \end{equation}
    where 
    	$\bm{W}=\sum\limits_s\sigma_s \bm{u}_s \bm{v}_s^{\top},\gamma_r(t)=\left\|\bm{Au}_r\right\|_2^2+\left\|\bm{Av}_r\right\|_2^2$.
\end{prop}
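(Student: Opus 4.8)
The plan is to reduce the statement to a direct matrix-calculus computation, since $\tilde{R}_\mathrm{TV}$ is a quadratic form in $\bm{W}$. First I would compute the Euclidean gradient $\nabla_{\bm{W}}\tilde{R}_\mathrm{TV}$ by differentiating the two Frobenius-norm terms separately. Writing $\left\|\bm{AW}\right\|_\mathrm{F}^2=\mathrm{tr}\left(\bm{W}^{\top}\bm{A}^{\top}\bm{A}\bm{W}\right)$ and $\left\|\bm{WA}^{\top}\right\|_\mathrm{F}^2=\mathrm{tr}\left(\bm{W}\bm{A}^{\top}\bm{A}\bm{W}^{\top}\right)$, the standard trace-derivative identities give $\nabla_{\bm{W}}\tilde{R}_\mathrm{TV}=2\bm{A}^{\top}\bm{A}\bm{W}+2\bm{W}\bm{A}^{\top}\bm{A}$. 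The symmetry of $\bm{A}^{\top}\bm{A}$ is exactly what produces the clean factor of $2$ in both summands and makes the two terms structurally parallel.

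Next I would contract this gradient against the rank-one matrix $\bm{u}_r\bm{v}_r^{\top}$, that is, compute $\bm{u}_r^{\top}\left(\nabla_{\bm{W}}\tilde{R}_\mathrm{TV}\right)\bm{v}_r$. The key simplification comes from the decomposition $\bm{W}=\sum_s\sigma_s\bm{u}_s\bm{v}_s^{\top}$ together with orthonormality of the columns of $\bm{U}$ and $\bm{V}$, which yields $\bm{W}\bm{v}_r=\sigma_r\bm{u}_r$ and $\bm{u}_r^{\top}\bm{W}=\sigma_r\bm{v}_r^{\top}$. Substituting the first identity into the term $2\bm{u}_r^{\top}\bm{A}^{\top}\bm{A}\bm{W}\bm{v}_r$ gives $2\sigma_r\bm{u}_r^{\top}\bm{A}^{\top}\bm{A}\bm{u}_r$, and substituting the second into $2\bm{u}_r^{\top}\bm{W}\bm{A}^{\top}\bm{A}\bm{v}_r$ gives $2\sigma_r\bm{v}_r^{\top}\bm{A}^{\top}\bm{A}\bm{v}_r$. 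Adding the two contributions produces $2\sigma_r\left(\bm{u}_r^{\top}\bm{A}^{\top}\bm{A}\bm{u}_r+\bm{v}_r^{\top}\bm{A}^{\top}\bm{A}\bm{v}_r\right)$. Finally, recognizing $\bm{u}_r^{\top}\bm{A}^{\top}\bm{A}\bm{u}_r=\left\|\bm{A}\bm{u}_r\right\|_2^2$ and $\bm{v}_r^{\top}\bm{A}^{\top}\bm{A}\bm{v}_r=\left\|\bm{A}\bm{v}_r\right\|_2^2$ identifies the bracket as $\gamma_r(t)$ and closes the argument.

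I do not expect a serious obstacle here, as the result is essentially bookkeeping once the gradient is in hand. The one place demanding genuine care is the gradient of the second term $\left\|\bm{WA}^{\top}\right\|_\mathrm{F}^2$, where the transpose on $\bm{A}$ must be tracked so that both summands carry the \emph{same} symmetric matrix $\bm{A}^{\top}\bm{A}$; a stray transpose or sign there would break the symmetry between the $\bm{u}_r$ and $\bm{v}_r$ contributions and destroy the final symmetric form of $\gamma_r$. It is also worth stating explicitly that the orthonormality used in the contraction step is precisely the property guaranteed by the analytic singular value decomposition recorded in the lemma above, so the identity holds for every $t$ along the flow and $\gamma_r(t)$ is well defined as a time-dependent quantity.
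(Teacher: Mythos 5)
Your proposal is correct and follows essentially the same route as the paper: compute $\nabla_{\bm{W}}\tilde{R}_\mathrm{TV}=2\bm{A}^{\top}\bm{A}\bm{W}+2\bm{W}\bm{A}^{\top}\bm{A}$ via the trace form of the Frobenius norms, then contract with $\bm{u}_r$, $\bm{v}_r$ using orthonormality of the singular vectors. Your use of the identities $\bm{W}\bm{v}_r=\sigma_r\bm{u}_r$ and $\bm{u}_r^{\top}\bm{W}=\sigma_r\bm{v}_r^{\top}$ is just a slightly tidier packaging of the paper's substitution of $\bm{W}=\sum_s\sigma_s\bm{u}_s\bm{v}_s^{\top}$ followed by the $\delta_{ss'}$ contraction.
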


\begin{proof}
    This is proved by direct calculation:
	$$
	\begin{aligned}
	\nabla_{\bm{W}}\tilde{R}_\mathrm{TV}
	&=\frac{\partial (\left\|\bm{AW}\right\|_\mathrm{F}^2+\left\|\bm{WA}^{\top}\right\|_\mathrm{F}^2)}{\partial \bm{W}}\\
	&=\frac{\partial \mathrm{tr}(\bm{AWW}^{\top}\bm{A}^{\top}+\bm{AW}^{\top}\bm{WA}^{\top})}{\partial \bm{W}}\\
	&=2\bm{WA}^{\top}\bm{A}+2\bm{A}^{\top}\bm{AW}\\
	&=2\sum\limits_s\sigma_s \bm{u}_s \bm{v}_s^{\top} \bm{A}^{\top}\bm{A}
	+2\bm{A}^{\top}\bm{A}\sum\limits_s\sigma_s \bm{u}_s \bm{v}_s^{\top}.
	\end{aligned}
	$$
	Note that
	$$
	\langle\bm{v}_s,\bm{v}_{s'}\rangle=\langle\bm{u}_s,\bm{u}_{s'}\rangle=\delta_{ss'}=
	\left\{
	\begin{array}{cc}
	1, & s=s',\\
	0. & s\neq s'.\\
	\end{array}\right.$$
	Therefore
	$$
	\begin{aligned}
	\bm{u}_r^{\top}\left(\nabla_{\bm{W}}\tilde{R}_\mathrm{TV}\right)\bm{v}_r 
	&= 2\sigma_r(\bm{u}_r^{\top}\bm{A}^{\top}\bm{Au}_r+\bm{v}_r^{\top}\bm{A}^{\top}\bm{Av}_r)\\
	&=2\sigma_r(\left\|\bm{A}\bm{u}_r\right\|_2^2+\left\|\bm{A}\bm{v}_r\right\|_2^2)\\
	&=2\sigma_r\gamma_r(t),
	\end{aligned}
	$$
	 where the term $\gamma_r(t)=\left\|\bm{Au}_r\right\|_2^2+\left\|\bm{Av}_r\right\|_2^2\geq 0$.
\end{proof}

\begin{cor}
    In the setting of Proposition \ref{prop..DynamicsReg}, we further have
    \begin{equation*}
        \dot{\sigma}_r(t)=-L \left(
        \sigma_r^{2}(t)\right)^{1-\frac{1}{L}} 
         \left\langle\nabla_{\bm{W}} R_{\bm{\Omega}}(\bm{W}(t)), \bm{u}_r(t)\bm{v}_r^{\top}(t)\right\rangle
        -2L\lambda \left(\sigma_r^2(t)\right)^{\frac{3}{2}-\frac{1}{L}} \gamma_r(t).
    \end{equation*}
\end{cor}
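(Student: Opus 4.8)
The plan is to observe that Proposition \ref{prop..SinuglarValue}, which restates Arora et al.'s Theorem 3, is proved for an \emph{arbitrary} analytic risk and never exploits the specific form of $R_{\bm{\Omega}}$ beyond analyticity. Since the regularized objective $R=R_{\bm{\Omega}}+\lambda\tilde{R}_\mathrm{TV}$ is again analytic — indeed $\tilde{R}_\mathrm{TV}$ is a quadratic polynomial in the entries of $\bm{W}$ — and is still a function of the product matrix $\bm{W}$ alone, I would first apply Proposition \ref{prop..SinuglarValue} verbatim with $R$ in the role of $R_{\bm{\Omega}}$. This yields
$$
\dot{\sigma}_r(t)=-L\left(\sigma_r^2(t)\right)^{1-\frac{1}{L}}\left\langle\nabla_{\bm{W}} R(\bm{W}(t)),\bm{u}_r(t)\bm{v}_r^{\top}(t)\right\rangle.
$$

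The second step is pure linearity. Writing $\nabla_{\bm{W}} R=\nabla_{\bm{W}} R_{\bm{\Omega}}+\lambda\nabla_{\bm{W}}\tilde{R}_\mathrm{TV}$ and using the identity $\langle\bm{M},\bm{u}_r\bm{v}_r^{\top}\rangle=\bm{u}_r^{\top}\bm{M}\bm{v}_r$, the inner product splits into a fidelity contribution and a regularization contribution. The latter, $\bm{u}_r^{\top}(\nabla_{\bm{W}}\tilde{R}_\mathrm{TV})\bm{v}_r$, is exactly the quantity computed in Proposition \ref{prop..DynamicsReg} and equals $2\sigma_r\gamma_r(t)$. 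Substituting gives a right-hand side with two terms, the TV term carrying a factor $\left(\sigma_r^2\right)^{1-1/L}\cdot\sigma_r$.

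It remains to tidy this exponent. Invoking the sign convention recorded just after Proposition \ref{prop..SinuglarValue} (for a non-degenerate factorization, $L\geq 2$, one may assume $\sigma_r(t)\geq 0$), I would write $\sigma_r=\left(\sigma_r^2\right)^{1/2}$, so that $\left(\sigma_r^2\right)^{1-1/L}\cdot\sigma_r=\left(\sigma_r^2\right)^{3/2-1/L}$, which produces the stated formula.

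The one point deserving genuine care — as opposed to routine algebra — is licensing the substitution of $R$ for $R_{\bm{\Omega}}$ in Arora's theorem. I would verify two things: that analyticity is preserved (immediate, since a sum of analytic functions is analytic), and that the balancedness invariant of Assumption \ref{assump..balance} is conserved along the flow of the \emph{regularized} objective. The latter holds because the conservation of $\bm{W}^{[l+1]\top}\bm{W}^{[l+1]}-\bm{W}^{[l]}\bm{W}^{[l]\top}$ follows solely from the layered chain-rule structure of $\partial R/\partial\bm{W}^{[l]}$ and is therefore insensitive to which function of $\bm{W}$ is being minimized; adding $\lambda\tilde{R}_\mathrm{TV}(\bm{W})$ changes nothing structurally. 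Granting these, the rest is bookkeeping already carried out in Proposition \ref{prop..DynamicsReg}.
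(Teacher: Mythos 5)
Your proposal is correct and takes essentially the same route as the paper's own (one-line) proof: apply Proposition \ref{prop..SinuglarValue} to the full analytic risk $R=R_{\bm{\Omega}}+\lambda\tilde{R}_\mathrm{TV}$, split $\nabla_{\bm{W}}R$ by linearity, substitute the identity $\bm{u}_r^{\top}(\nabla_{\bm{W}}\tilde{R}_\mathrm{TV})\bm{v}_r=2\sigma_r\gamma_r(t)$ from Proposition \ref{prop..DynamicsReg}, and absorb the extra factor $\sigma_r=(\sigma_r^2)^{1/2}$ into the exponent. Your added checks (analyticity of the sum, conservation of the balancedness invariant under the regularized flow) merely make explicit what the paper's phrase ``follows directly'' leaves implicit, and they are sound.
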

\begin{proof}
     Note that $R=R_{\bm{\Omega}}+\lambda 
     \tilde{R}_\mathrm{TV}$. The statement follows directly from Propositions \ref{prop..SinuglarValue} and \ref{prop..DynamicsReg}.
\end{proof}
The terms $\left(\sigma_r^2(t)\right)^{\frac{3}{2}-\frac{1}{L}}$ have a larger exponent than $\left(\sigma_r^2(t)\right)^{1-\frac{1}{L}}$.
The terms $\left(\sigma_r^2(t)\right)^{\frac{3}{2}-\frac{1}{L}}$ can enhance/attenuate singular values more significantly than $\left(\sigma_r^2(t)\right)^{1-\frac{1}{L}}$.
Specially, if we choose a large enough $\lambda$,
$R=R_{\bm{\Omega}}+\lambda \tilde{R}_\mathrm{TV}$ is dominated by $\tilde{R}_\mathrm{TV}$.
The term $\tilde{R}_\mathrm{TV}=0$ if and only if all the entries of $\bm{W}$ have the same value, and the rank of $\bm{W}$ equals $1$ or $0$.
This is consistent with the above analysis.
Our theoretic discussion is not complete because both the terms $\gamma_r(t)$ and $\left\langle\nabla_{\bm{W}} R_{\bm{\Omega}}(\bm{W}(t)), \bm{u}_r(t)\bm{v}_r^{\top}(t)\right\rangle$ are unknown and the regularization term is not the exact TV norm.
We design some experiments to show our conclusion about the low-rank regularization property of (the exact) TV norm in general.

We calculate the effective rank \cite{Roy2007TheER} of RDMF with different activation functions.
Both of the RDMF with and without TV regularization terms are calculated.
Fig. \ref{fig:rankcameraman} shows the effective rank of restored Cameraman at different missing percentage and various RDMF.
Tab. \ref{tab:rank of text} shows the effective rank of the restored textual image at different missing percentage by different RDMF.
Almost all the models with TV have a lower effective rank than these models without TV.
This phenomenon is consistent with Corollary 1:
TV norm has a stronger implicit low-rank regularization during training.
Another important phenomenon is that almost all the models which achieve good performances (Fig. \ref{fig:namecameraman}, Tab. \ref{tab:nmae of text}) have a low effective rank.
This phenomenon indicates that the implicit low-rank regularization indeed produces better-restored quality.
Last but not least, the model with a nonlinear activation function also keeps the restored matrix's low-rank property.
This phenomenon still does not have theoretical analysis, and it needs further research in the future.

Another conclusion of Corollary 1 is that a lager $L$ will enhance the low-rank property.
Fig. \ref{fig:depth} shows that when $L=2$, the model performance badly, especially when the missing percentage is significant.
When $L=3,4$, the rank of the restored image is lower than the case of $L=2$.
Furthermore, the restored image of the massive $L$ is closer to the real image than the small one.

\begin{figure}
	\centering
	\includegraphics[width=0.8\linewidth]{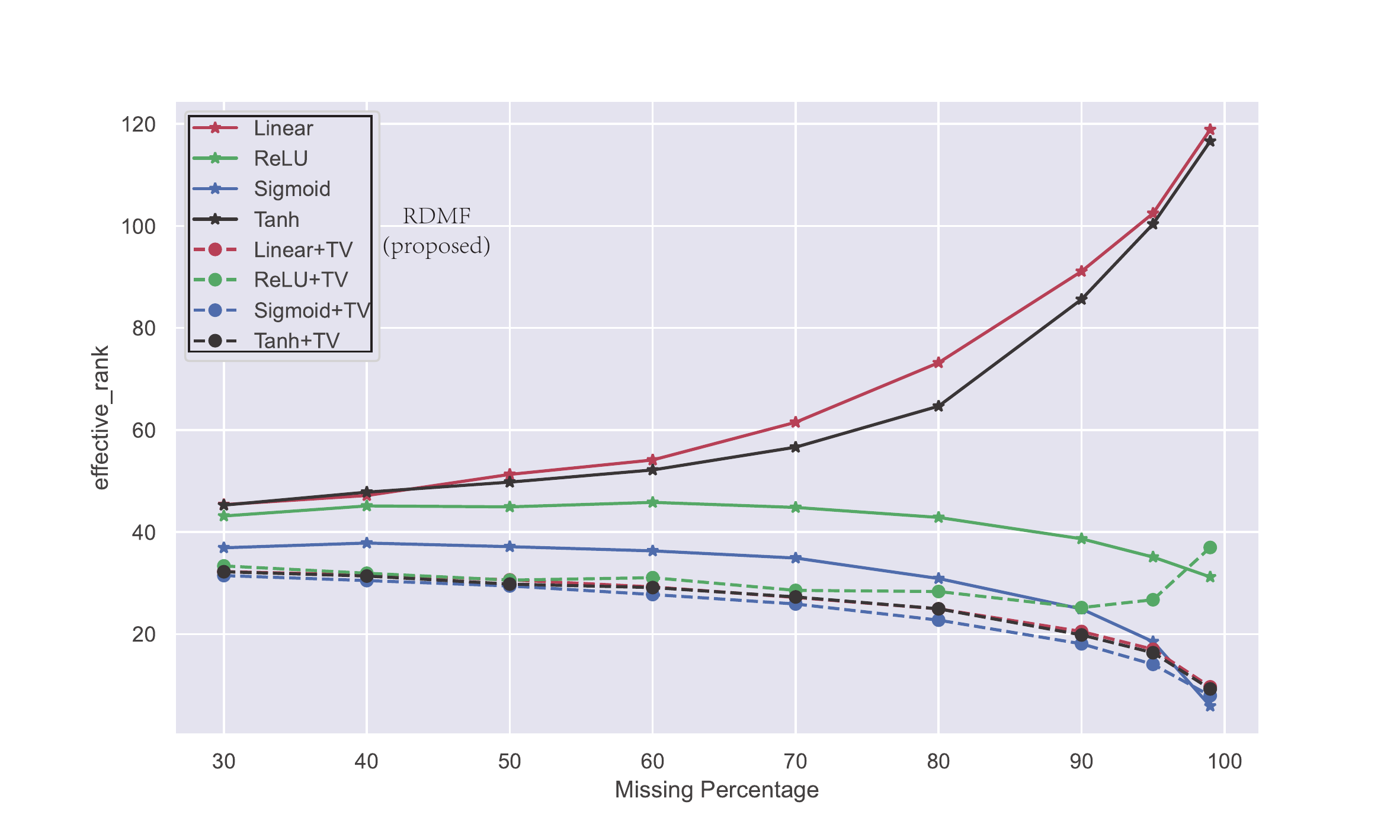}
	\caption{Effective rank \cite{Roy2007TheER} of different models in recovering Cameraman.
		Linear, ReLU, Sigmoid and Tanh represent RDMF model with
		different activation functions, and no TV term is added.
		Linear+TV, ReLU+TV, Sigmoid+TV and Tanh+TV represent RDMF
		model with different activation functions and TV term.}
	\label{fig:rankcameraman}
\end{figure}
\begin{table}[htbp]
	\centering
	\caption{The effective rank of the restored textual image with different models. 
		Linear, ReLU, Sigmoid, and Tanh represent the RDMF model with different activation functions, and no TV term is added.
		Different rows of the table represent the different missing percentage of images.}
	\begin{tabular}{r|rrrr|rrrr}
		\toprule
		& \multicolumn{4}{c|}{Without TV} & \multicolumn{4}{c}{With TV} \\
		\cmidrule{2-9}          & \multicolumn{1}{l}{Linear} & \multicolumn{1}{l}{ReLU} & \multicolumn{1}{l}{Sigmoid} & \multicolumn{1}{l|}{Tanh} & \multicolumn{1}{l}{Linear} & \multicolumn{1}{l}{ReLU} & \multicolumn{1}{l}{Sigmoid} & \multicolumn{1}{l}{Tanh} \\
		\midrule
		30\%  & 19.5998 & 17.2613 & 12.992 & 18.0705 & 12.5006 & 13.0034 & \textbf{12.0801} & 12.7237 \\
		40\%  & 23.4589 & 19.1827 & 13.7183 & 20.4861 & 12.8015 & 13.7993 & \textbf{12.2395} & 12.6712 \\
		50\%  & 26.7782 & 21.2089 & 14.5887 & 23.4846 & 13.0383 & 12.8354 & \textbf{12.3894} & 12.8573 \\
		60\%  & 30.2439 & 23.1571 & 15.9678 & 27.6808 & 13.0675 & 19.4384 & \textbf{12.4225} & 13.062 \\
		70\%  & 34.8062 & 23.4533 & 17.6626 & 29.816 & 13.105 & 17.9026 & \textbf{12.5056} & 13.108 \\
		80\%  & 41.7717 & 23.1704 & 19.0667 & 30.1278 & 13.1146 & 19.1531 & \textbf{12.3723} & 12.9568 \\
		90\%  & 64.4488 & 20.4023 & 15.8515 & 28.7328 & 12.5982 & 18.9162 & \textbf{11.3012} & 11.9169 \\
		95\%  & 84.3133 & 17.6296 & 11.6738 & 35.9907 & 11.8874 & 15.52 & \textbf{9.9576} & 10.176 \\
		99\%  & 105.9509 & 14.3999 & \textbf{3.695} & 111.0835 & 7.1012 & 39.2951 & 5.4971 & 6.6454 \\
		\bottomrule
	\end{tabular}%
	\label{tab:rank of text}%
\end{table}%

\begin{figure}
	\centering
	\includegraphics[width=0.8\linewidth]{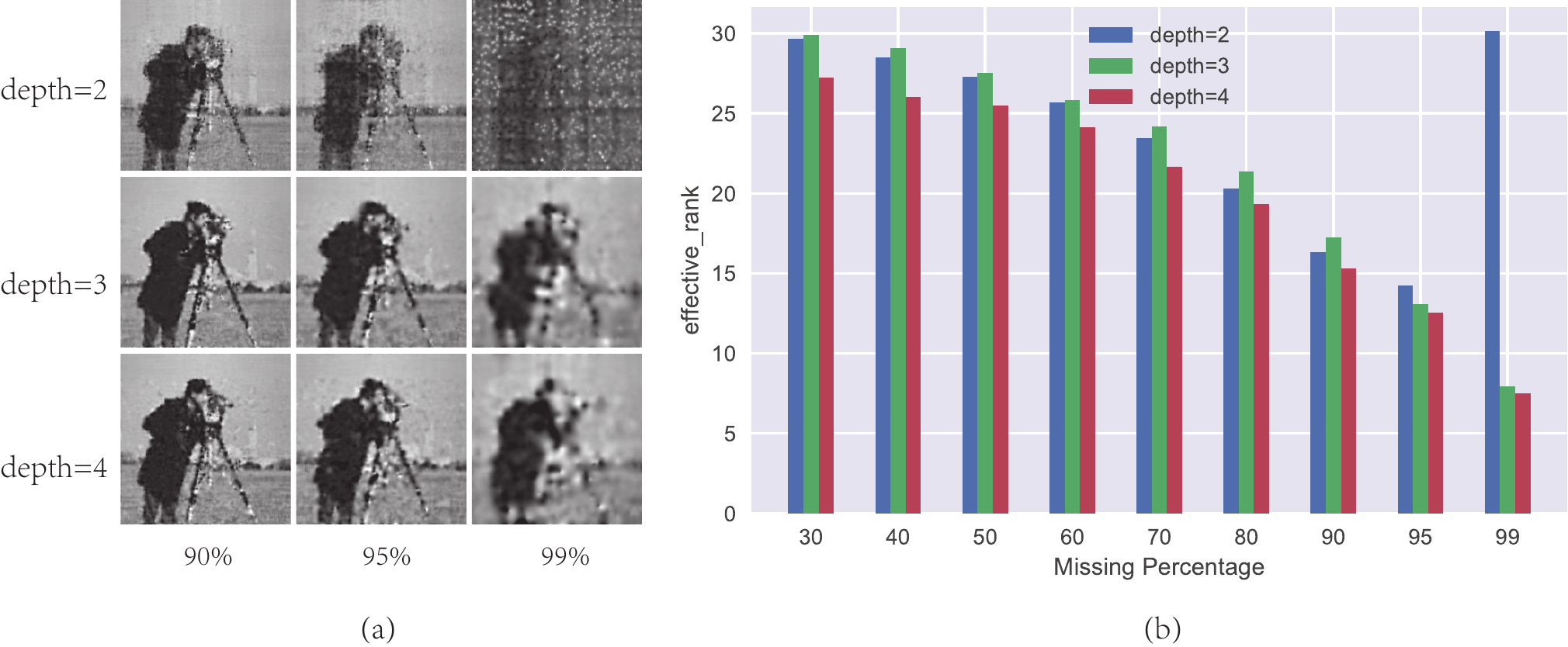}
	\caption{In this experiment, we restored the cameraman with a different percentage randomly missed pixels. 
		We set $m_i=240$ and change $L$ from 2 to 4.
		Part of the restored results is shown in (a).
		All the effective rank results are shown in (b).}
	\label{fig:depth}
\end{figure}

\section{Conclusion}
In this paper, we present an RDMF model for matrix completion.
This model shows that linear factor models with three equal-size matrices have
enough modeling capacity. 
The critical point of the proposed model is that we combine the implicit bias in deep networks with TV.
We conduct experiments on image tasks to demonstrate that RDMF outperforms state-of-the-art matrix completion methods.
We also conduct experiments to test the impact of principal components in
the proposed model. 
The results show that the TV regularization term can significantly improve restored performance.
The improvement of adding nonlinear activation functions in a model can be ignored compared with the TV regularization term.
A wider and deeper model preserves more information than a narrow and shallow one.
Finally, we have discussed the underlying mechanism of RDMF based on a deep linear model.
The TV regularization term has a stronger implicit low-rank regularization property.
The designed experiments confirm our discussion well.

In the future, we plan to extend the new model to other matrix completion problems.
The regularization term of our proposed model is related to the matrix completion
problem firmly. We plan to explore the reinforcement learning framework to auto choose the regularization term based on the completion problem.


\bibliographystyle{unsrt}  
\bibliography{references}  


\end{document}